\newcommand{\citeposs}[1]{\citeauthor{#1}'s (\citeyear{#1})}
\crefname{section}{\S}{\S\S}
\Crefname{section}{\S}{\S\S}
\crefname{table}{Table}{Tables}
\crefname{figure}{Figure}{Figures}
\crefname{algorithm}{Algorithm}{}
\crefname{equation}{eq.}{}
\crefname{appendix}{App.}{}
\crefname{prop}{Proposition}{}
\newtheorem{theorem}{Theorem}[section]
\newtheorem{lemma}[theorem]{Lemma}
\newcommand{\bw}{\mathbf{w}}
\newcommand{\calW}{\mathcal{W}}
\newcommand{\bh}{\mathbf{h}}
\newcommand{\bz}{\mathbf{z}}
\newcommand{\bW}{\mathbf{W}}
\renewcommand{\hat}{\widehat}
\newcommand{\renyi}{R\'{e}nyi}
\newcommand{\ent}{\mathrm{H}}
\newcommand{\samplerenyi}{\mathrm{R}}
\newcommand{\defn}[1]{\textbf{#1}}
\newcommand{\ipaword}[1]{/\textipa{#1}/}
\newcommand{\sense}[1]{\textsc{#1}}
\title{On Homophony and R{\'e}nyi Entropy}
\newcommand{\night}{\emoji[openmoji]{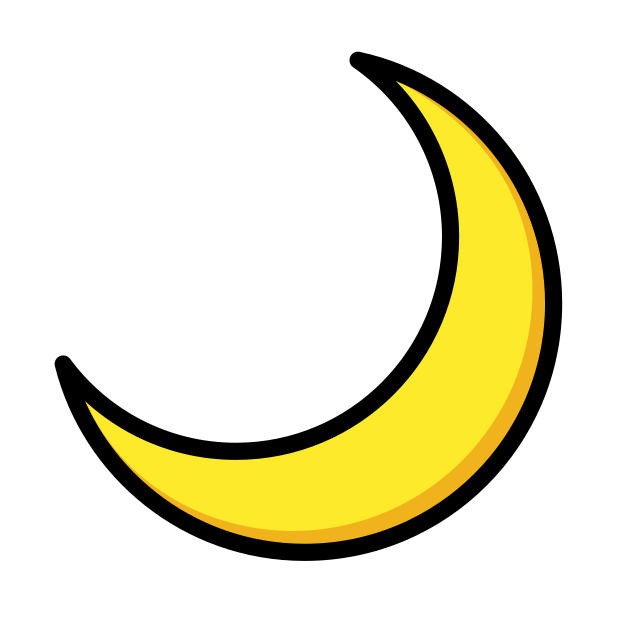}}
\newcommand{\knight}{\emoji[openmoji]{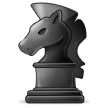}}
\newcommand{\emo}[1]{\raise1.0ex\hbox{\normalfont#1}}
\newcommand{\ucambridge}{\emo{\knight}}
\newcommand{\ethz}{\emo{\night}}
\author{Tiago Pimentel\ucambridge{}~\;~ Clara Meister\ethz{}~\;~ Simone Teufel\ucambridge{}~\;~ Ryan Cotterell\ucambridge{}\ethz{} \\
  \ucambridge{} University of Cambridge~\;~\;~\;~\ethz{} ETH Z\"{u}rich \phantom{\ucambridge{}} \\
  \texttt{\href{mailto:tp472@cam.ac.uk}{tp472@cam.ac.uk}}~\;~
  \texttt{\href{mailto:clara.meister@inf.ethz.ch}{clara.meister@inf.ethz.ch}}~\;~ \\
  \texttt{\href{mailto:sht25@cl.cam.ac.uk}{sht25@cl.cam.ac.uk}}~\;~ \texttt{\href{mailto:ryan.cotterell@inf.ethz.ch}{ryan.cotterell@inf.ethz.ch}}
}
\date{}
\begin{document}
\maketitle
\begin{abstract}
Homophony's widespread presence in natural languages is a controversial topic.
Recent theories of language optimality have tried to justify its prevalence, despite its negative effects on cognitive processing time; e.g., \citet{piantadosi2012communicative} argued homophony enables the reuse of efficient wordforms and is thus beneficial for languages.
This hypothesis has recently been challenged by \citet{trott2020human}, who posit that good wordforms are more often homophonous simply because they are more phonotactically probable.
In this paper, we join in on the debate.
We first propose a new information-theoretic quantification of a language's homophony: the sample R\'{e}nyi entropy.
Then, we use this quantification to revisit \citeauthor{trott2020human}'s claims.
While their point is theoretically sound, a specific methodological issue in their experiments raises doubts about their results.
After addressing this issue, we find no clear pressure either towards or against homophony---a much more nuanced result than either \citeauthor{piantadosi2012communicative}'s or \citeauthor{trott2020human}'s findings.
\end{abstract}

\section{Introduction}
Ambiguity is a hallmark of human language, and is present at all levels of linguistic structure. Both the causes and resulting effects of ambiguity are topics which have sparked much debate. For example, while some claim it has a beneficial impact on communication, others have taken it as a sign of inefficiency.
In this work, we contribute to the debate surrounding a specific form of lexical ambiguity in which a wordform shares multiple unrelated meanings:  \defn{homophony}.\footnote{In English, one example is the word \ipaword{naIt} which, out of context, could mean either \sense{knight} or \sense{night}.}\looseness=-1

While the quantitative study of homophony dates back to (at least) \citet{zipf1949human}, recently, \newcite{trott2020human} proposed a new explanation for the higher rate of homophony amongst good, i.e. short or phonotactically well-formed, wordforms: if words were sampled i.i.d. from a phonotactic distribution, then good wordforms would simply be sampled more often.
This implies there is no pressure \emph{favouring} homophony in these words for the sake of efficiency, which directly opposes \citeposs{piantadosi2012communicative} hypothesis.
In fact, \citeauthor{trott2020human} go further, relying on their experimental results to argue that ``homophony may even be selected \emph{against} in real languages.''\footnote{Their argument is actually more subtle than this. They posit, for instance, that this pressure may be indirect or that there might not actually be a pressure against the existence of homophony per se, but rather that their results could reflect a constraint on the extent to which any given wordform can be saturated with distinct, unrelated meanings.}

In this work, we join this debate by proposing a novel quantification of a language's homophony: the \defn{sample \renyi{} entropy}\footnote{The \renyi{} entropy \citep{renyi1961measures} is a generalisation of Shannon's entropy \citep{shannon1948mathematical}.}---defined as the negative log likelihood (or surprisal) that two instances in an $M$-sized sample take on the same value. 
When measured on an observed lexicon, this measure holistically captures the chance that two wordforms coincide, i.e., that they are homophones, providing a new means to test whether lexicons have a pressure \emph{in favour} or \emph{against} homophony.\looseness=-1

Further, we revisit \citeauthor{trott2020human}'s claims. 
Whilst their theoretical arguments are sound, we believe their experimental design could not have provided concrete evidence for or against their hypothesis. 
Specifically, their inadequate modelling of the phonotactic distribution, through the use of weakly regularised $n$-grams, cause us to question conclusions drawn from their experiments.
We take measures to address this issue---relying on more expressive LSTM language models---and provide our own analysis of homophony in natural language.

Experimentally, we arrive at more nuanced results than prior work, finding no pressure either towards or against homophony.
We conclude with the warning that biases in our models---and those of other works---need to be deeply considered when relying on them to answer linguistics questions.

\section{Homophony}
Homophony is a widespread phenomenon which has long puzzled linguists.
On average, roughly $4\%$ of the words in a language are estimated to be homophones \citep{dautriche2015weaving}.
This rate, however, has a large variation across languages---in English, for instance, \citet{rodd2002making} estimates it to be $7.4\%$.  
A number of works hint at the inefficiencies homophony leads to: \citet{rodd2002making} find that homophonous words are recognised more slowly; \citet{mazzocco1997children} shows homophones are harder for children to learn.

Yet a large body of work has argued \emph{for} the efficiency of homophony. \newcite{piantadosi2012communicative} suggest ambiguity is a desirable property in that it  increases a language's communicative efficiency. 
Ambiguity would allow a language to reuse good wordforms, which falls in line with \citeauthor{zipf1949human}'s principle of least effort.
With this in mind, \citeauthor{piantadosi2012communicative} showed that short, frequent and phonotactically probable wordforms have more homophones than their counterparts. In further support of this hypothesis, \citet{dautriche2018learning} found children easily learn to differentiate homophones when pairs have distinct syntactic categories (and that homophony is more likely in these cases); \citet{pimentel-etal-2020-speakers} showed speakers make contexts more informative in the presence of lexical ambiguity. These results suggest people naturally navigate ambiguity.

However, \citeauthor{trott2020human} recently proposed a new explanation of \citeauthor{piantadosi2012communicative}'s findings: They attribute homophony to chance.
Specifically, they claim that if we model the phonotactic distribution probabilistically, we can see that more homophones amongst good wordforms should be \emph{expected}---they are simply more probable.
Yet their methodology to support this claim---comparing natural lexicons with artificial ones sampled from $n$-gram language models---has an important setback:
their use of 5-gram models with only weak Laplace smoothing.
These models are prone to overfitting. As such, it is not surprising that an artificially generated lexicon would contain more homophony than natural ones; as we will show, overfit distributions will likely produce more collisions.
Thus, it is not entirely clear what we can conclude from their experiments.\looseness=-1
\section{Quantifying Homophony}

As per its definition, homophony should be tightly related to a language's phonotactics---its distribution over wordforms.
In this section, we first provide a definition of a language's phonotactic distribution. We then present both the \renyi{} collision entropy and the sample \renyi{} entropy as new measures of homophony.

\subsection{Phonotactics and Wordforms}

Formally, phonotactics defines a language's set of plausible wordforms.
Its classic exemplification, provided by \citet{chomsky1965some}, is that while the unattested wordform \emph{blick} would be plausible in English, \emph{*bnick} would not.
Under a probabilistic interpretation \citep{hayes2008maximum,gorman2013generative}, this can be re-stated as \emph{blick} having high phonotactic probability, while \emph{*bnick} has low phonotactic probability.
Notedly, a language's phonotactics highly constrains its set of possible wordforms \citep{dautriche2017words} and, cross-linguistically, the size of these sets seems to be roughly constant \citep{pimentel-etal-2020-phonotactic}.
Further, phonotactics has a tight relationship with word frequency; more phonotactically likely words are more frequent \citep{mahowald2018word}.\looseness=-1

We model the phonotactic distribution over possible wordforms as a language model:
\begin{equation}
    p(\bw) = \prod_{t=1}^{|\bw|} p(w_t \mid \bw_{<t})
\end{equation}
whose support is the infinite set $\calW$---defined here as the Kleene closure of a phonetic alphabet $\Sigma^*$, albeit where all $\bw \in \calW$ are padded with beginning-of- and end-of-word symbols. 
Under this definition, highly plausible wordforms would be assigned high probability, and vice-versa.

\subsection{Entropy as a Measure of Homophony} \label{sec:renyi_entropy}

The \renyi{} entropy is a generalisation of the more well-known Shannon entropy.
By its information-theoretic definition, a natural parallel can be drawn between \renyi{} entropy and homophony. 
Its general form is defined as
\begin{equation}\label{eq:renyi_general}
    \ent_{\alpha}(p) = \frac{1}{1-\alpha}\log\left(\sum_{\bw \in \calW} p(\bw)^\alpha\right)
\end{equation}
for $\alpha \geq 0, \neq 1 $. 
If we take the limit $\alpha \rightarrow 1$, we recover the Shannon entropy:
\begin{align}
    \ent_1(p) = -\sum_{\bw \in \calW} p(\bw) \log p(\bw)
\end{align}
which captures the inherent uncertainty in a distribution, i.e. the larger its value, the less predictable the outcome. 
The case of $\alpha=2$ yields the collision entropy (sometimes just termed the \renyi{} entropy)
\begin{equation} \label{eq:collision}
    \ent_2(p) = -\log\sum_{\bw \in \calW} p(\bw)^2 
\end{equation}
which, in our setting, is the negative log likelihood that two wordforms sampled i.i.d. from the same distribution are the same.
It thus provides a natural quantification of homophony in a language where the words are distributed i.i.d.\footnote{We note that the \renyi{} collision entropy measures a specific notion of homophony, one which is closely related to the average number of meanings per wordform.
By selecting other values for $\alpha$ in the \renyi{} entropy, one can capture different properties of the phonotactic distribution.
The \renyi{} min-entropy $\ent_{\infty}(p)$, for instance, is defined by a choice of $\alpha=\infty$ in \cref{eq:renyi_general} and measures the surprisal of the most probable wordform, being instead closely related to the maximum number of meanings per wordform.}\looseness=-1

Although both the collision and Shannon's entropies are measures of uncertainty, they capture distinct properties of the distribution.
Shannon's entropy represents the \emph{expected} surprisal of observing any specific wordform $\bw$, while the collision entropy computes the surprisal that a pair of words have identical forms, independent of which form.

\subsection{Measuring Collisions in a Lexicon}

In the previous section, we presented the \renyi{} collision entropy as measured on a specific phonotactic distribution.
The measure in \cref{eq:collision} has the unstated assumption that a pair of wordforms would be sampled i.i.d. from this distribution---i.e., 
the probability of a collision is $p(\bw)^2$, as opposed to  $p(\bw^{(1)})p(\bw^{(2)} \mid \bw^{(1)})$.
We do not, however, know if this i.i.d. assumption is valid for naturally occurring lexica.
We thus propose a new measure, termed the \defn{sample \renyi{} entropy}, which does not inherently encode an i.i.d. assumption.
Given an observed lexicon $\widetilde{\bW} = \{\widetilde{\bw}^{(m)} \}_{m=1}^{M}$ of size $M$, 
we directly measure the surprisal of two randomly selected words being a homophone as
\begin{align} \label{eq:collision_lexicon}
    \samplerenyi (&\widetilde{\bW} ) =\\
    &- \log \frac{\sum\limits_{m=1}^{M} \, \, \sum\limits_{m'=1, m'\neq m}^M \mathbbm{1} \left\{ \widetilde{\bw}^{(m)} = \widetilde{\bw}^{(m')} \right\}}{M (M-1)} \nonumber
\end{align}
In words, the above equation estimates the likelihood of a collision as the number of observed homophones over the number of possible collisions.

Notably, if words are sampled i.i.d.---i.e., if there is no pressure in favour or against homophony---the sample \renyi{} entropy goes to the actual \renyi{} entropy in \cref{eq:collision} as $M \rightarrow \infty$. In other words, under the i.i.d. assumption, \cref{eq:collision_lexicon} is a consistent estimator of \cref{eq:collision}.%

\subsection{A Tractable Estimate of \renyi{} Entropy}
Note that in our setting, it is impossible to exactly calculate the \renyi{} entropy $\ent_2(p)$, given that the support of $p$ (i.e., $\calW$) is infinite.
In this work, we estimate $\hat{\ent}_2(p)$ over a %
subset $\calW_\delta \subset \calW$:
\begin{equation} \label{eq:renyi_approx}
    \hat{\ent}_2(p) = -\log\sum_{\bw \in \calW_\delta} p(\bw)^2 
\end{equation}
Fortunately, we can show a tight bound on the approximation
when the finite $\calW_\delta$ is chosen wisely.%
\begin{theorem} \label{thm:renyi_bound}
Let $\calW_\delta$ be the set of all wordforms with a probability of at least $\delta$, i.e. 
$\calW_\delta = \{\bw \mid \bw \in \calW, p(\bw) \ge \delta\}$.
We can bound our estimate error as:\footnote{We note that, in practice, we do not know the exact distribution $p(\bw)$ and use an estimate instead (detailed in \cref{sec:operationalisation}). 
\noindent This theorem only bounds one of the potential sources of uncertainty in our measurements: namely, metric computation, as opposed to model estimation.
}\looseness=-1%
\begin{align}\label{eq:bound}
    \ent_2(p) &\leq \hat{\ent}_2(p) \leq \ent_2(p) + \log\left(1 + \frac{(1 - \xi)\,\delta}{\eta}\right) \nonumber
\end{align}
where we can precisely compute both $\xi$ and $\eta$, which are defined as $\xi = \sum_{\bw \in \calW_\delta} p(\bw)$ and $\eta = \sum_{\bw \in \calW_\delta} p(\bw)^2$.
\end{theorem}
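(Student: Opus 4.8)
The plan is to reduce both inequalities in \cref{eq:bound} to statements about two sums of squared probabilities and then exploit the monotonicity of the logarithm. First I would introduce the full collision mass $S = \sum_{\bw \in \calW} p(\bw)^2$, so that $\ent_2(p) = -\log S$ by \cref{eq:collision} while $\hat{\ent}_2(p) = -\log \eta$ by \cref{eq:renyi_approx} and the definition of $\eta$. Since $\calW_\delta \subseteq \calW$ and every summand $p(\bw)^2$ is nonnegative, the partial sum cannot exceed the full one, i.e. $\eta \le S$. Because $-\log$ is decreasing, this immediately yields the lower bound $\ent_2(p) = -\log S \le -\log \eta = \hat{\ent}_2(p)$.

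For the upper bound I would control the discarded tail mass $T = S - \eta = \sum_{\bw \in \calW \setminus \calW_\delta} p(\bw)^2$. The defining property of $\calW_\delta$ is that every wordform outside it has $p(\bw) < \delta$; splitting each squared term as $p(\bw)^2 = p(\bw)\cdot p(\bw)$ and bounding one factor by $\delta$ gives
\begin{equation}
    T \;=\; \sum_{\bw \in \calW \setminus \calW_\delta} p(\bw)\,p(\bw) \;\le\; \delta \sum_{\bw \in \calW \setminus \calW_\delta} p(\bw) \;=\; \delta\,(1 - \xi),
\end{equation}
where the final equality uses that $p$ sums to one over $\calW$ and that $\xi = \sum_{\bw \in \calW_\delta} p(\bw)$ is the mass retained inside $\calW_\delta$. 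Hence $S = \eta + T \le \eta + \delta\,(1-\xi)$.

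Combining the two pieces, I would write the approximation gap as a single logarithm and substitute the tail bound:
\begin{equation}
    \hat{\ent}_2(p) - \ent_2(p) \;=\; -\log \eta + \log S \;=\; \log\frac{S}{\eta} \;\le\; \log\frac{\eta + \delta(1-\xi)}{\eta} \;=\; \log\left(1 + \frac{(1-\xi)\,\delta}{\eta}\right),
\end{equation}
which is exactly the stated upper bound. The only genuinely substantive step is the tail estimate: the trick is to spend just one of the two factors of $p(\bw)$ on the threshold $\delta$, leaving the other to recombine into the excluded probability mass $1-\xi$; bounding both factors by $\delta$ or neither would give either a vacuous or an unusable estimate. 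Everything else is bookkeeping with the monotone logarithm (well defined as long as $\calW_\delta$ is nonempty, so that $\eta>0$), so I expect no further obstacle.
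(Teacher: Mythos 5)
Your proof is correct, and it reaches the stated bound by a genuinely simpler route than the paper at the one substantive step. Both arguments share the same skeleton: the lower bound follows from $\eta \le S$ and monotonicity of $-\log$, and the gap is rewritten as $\log(S/\eta) = \log(1 + T/\eta)$ with $T = \sum_{\bw \in \calW \setminus \calW_\delta} p(\bw)^2$ the discarded tail. Where you diverge is in bounding $T$. The paper delegates this to a separate technical lemma stating that if $\{x_n\}$ lie in $[0,\delta]$ and sum to $\beta$, then $\sum_n x_n^2 \le \beta\delta$; it proves this by characterizing the extremal configuration (as many coordinates as possible pinned at $\delta$) via an exchange argument by contradiction, and then evaluating that maximizer explicitly with floor functions. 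You instead observe that each tail term satisfies $p(\bw)^2 \le \delta\, p(\bw)$ because $p(\bw) < \delta$ off $\calW_\delta$, and sum. This termwise estimate yields $T \le \delta(1-\xi)$ in one line and in fact proves the paper's technical lemma itself immediately under the same hypotheses, so nothing is lost in generality. What the paper's longer argument buys is a description of exactly which configuration attains the worst case, which could matter if one wanted to assess tightness of the bound; your argument buys brevity and avoids the auxiliary lemma entirely. One minor point worth keeping explicit, which you do note: the manipulation requires $\eta > 0$, i.e.\ $\calW_\delta$ nonempty, and the identity $\sum_{\bw \notin \calW_\delta} p(\bw) = 1 - \xi$ uses that $p$ is normalized over $\calW$ --- both assumptions the paper also makes implicitly.
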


\begin{proof}
Proof is given in \cref{app:renyi_bound_proof}.
\end{proof}
\noindent This theorem implies $\hat \ent_2(p)$ is an upper bound on the true value $\ent_2(p)$, which can be made arbitrarily tight for small $\delta$ (we choose $\delta=10^{-8}$ here).

\subsection{A Null Hypothesis Test}

We now construct a null-hypothesis test to evaluate whether the observed lexicon is shaped by pressures in favour or against homophony.
Our \defn{null distribution} over lexica of size $M$ is defined as \begin{equation}
    p(\bW) = \prod_{m=1}^M p(\bw^{(m)})
\end{equation}
where $p(\bw)$ is a phonotactic distribution.
We further define a second distribution over values of the sample \renyi{} entropy, i.e. $p(\samplerenyi(\bW))$, where $\bW$ is distributed according to $p(\bW)$.
We can now ask whether the \renyi{} entropy in the observed lexicon is abnormal under the null distribution.
This suggests the following null hypothesis test:
\begin{itemize}
    \itemsep0em 
    \item $T_0$: $\samplerenyi (\widetilde{\bW})$ is sampled from $p(\samplerenyi(\bW))$
    \item $T_1$: $\samplerenyi (\widetilde{\bW})$ is not sampled from  $p(\samplerenyi(\bW))$
\end{itemize}
For a given $p(\bW)$, we can now test this hypothesis by evaluating the following probabilities:
\begin{equation}
    p(\samplerenyi (\bW) \leq \samplerenyi (\widetilde{\bW})) \text{ or } p(\samplerenyi (\bW) \geq \samplerenyi (\widetilde{\bW}))
\end{equation}
which we can estimate using Monte Carlo sampling. 
We reject the null hypothesis if either probability is smaller than $0{.}005$, which yields a confidence value of $p<0{.}01$ under a two-tailed test.
Strictly speaking, rejecting $T_0$, means that we have rejected that the sample \renyi{} entropy of the observed lexicon is plausibly consistent with the sample \renyi{} entropy of a lexicon sampled according to the null distribution $p(\bW)$.

We now analyse the assumptions we make by using $p(\bW)$ and discuss what conclusions we may be able to draw despite those assumptions. 
The two important assumptions are as follows:
\vspace{-2pt}
\begin{enumerate}[(i)]
    \itemsep0em 
    \item wordforms are sampled according to $p(\bw)$;
    \item wordforms are sampled i.i.d.
\end{enumerate}
\vspace{-2pt}
Therefore, if we believe our phonotactic distribution is correct---i.e., assumption (i) is good---this null hypothesis directly tests whether wordforms are sampled i.i.d.
Rejecting it, thus, gives us evidence that homophony is either favoured or hindered in a lexicon.
Should we believe assumption (i), we find evidence in support of homophony avoidance if the observed lexicon's sample \renyi{} entropy is significantly larger than the artificial one's. 
On the other hand, we find evidence of a pressure in favour of homophony if the observed lexicon's sample \renyi{} entropy is smaller than its artificial counterpart.
Assumption (i) is rather important, however, as we discuss further in \cref{sec:results}.

\section{Experimental Methodology} \label{sec:operationalisation}

The sample \renyi{} entropy, presented in \cref{eq:collision_lexicon}, can be directly computed on an observed lexicon.
On the other hand, both the \renyi{} entropy (as depicted in \cref{sec:renyi_entropy}) and our null hypothesis test are computed over a phonotactic distribution, to which we do not have direct access.
An important consideration, thus, is how exactly this distribution can be approximated.
Recently, \citet{trott2020human} relied on weakly regularised $n$-gram models for their analysis. 
As can be seen in our earlier results \citep{pimentel-etal-2020-phonotactic}, neural language models can capture this phonotactic distribution much more faithfully.
In this work, we will compare \citeauthor{trott2020human}'s $n$-grams with \citeauthor{pimentel-etal-2020-phonotactic}'s LSTM models, and show how $n$-grams may give misleading results.\looseness=-1

\vspace{-3pt}
\paragraph{$n$-gram.}
Perhaps the simplest method for estimating distributions of phones in a language is through $n$-gram, or in this case $n$-phone, modelling. 
Specifically, we can estimate the probability of observing some phone $w_t$ given the previous $n-1$ phones by computing the proportion of times this phone follows those previous $n-1$ phones in a corpus.
By this definition, sequences not present in the corpus will be assigned $0$ probability under the model. This, among other factors, contributes to the often poor generalisation abilities of basic $n$-gram models. Indeed, there exists an entire literature on smoothing and regularisation techniques for $n$-gram modelling \cite{katz,Ney94,chen-goodman-1996-empirical}. 
Laplace smoothing is a popular choice, being used in a number of recent works in computational linguistics \citep[e.g.][]{dautriche2017words,trott2020human}. However, it is perhaps the simplest of such regularisation techniques, and usually leads to much weaker empirical performances than, e.g., Kneyser--Ney \citep{Ney94}.
It is therefore natural to question whether an $n$-gram model with simple Laplace smoothing can provide a good representation of the true phonotactic distribution of a language.
In our experiments, we follow \citeauthor{trott2020human} in using a 5-gram model with Laplace smoothing with strength $0.01$ as $p(w_t \mid \bw_{<t})$.\looseness=-1

\vspace{-3pt}
\paragraph{LSTM.} 
In the task of sentence-level language modelling, neural models have surpassed their $n$-gram counterparts with respect to standard evaluation metrics. Neural architectures similarly outperform an $n$-phone model on the task of representing the phonotactic distribution.
We thus make use of a vanilla LSTM character-level language model to estimate this distribution, using a similar architecture to \citeposs{pimentel-etal-2020-phonotactic}. 
In short, we first retrieve a lookup embedding $\bz_t \in \mathbb{R}^e$ for each phone $w_t$ in a wordform.
We then feed these into an LSTM \citep{hochreiter1997long} to get hidden states $\bh_{t} \in \mathbb{R}^d$.
Finally, these hidden states are linearly transformed and processed by a softmax to arrive at a distribution $p(w_t \mid \bw_{<t})$ over the next token.
We train this model by minimising its cross-entropy with the distribution of the observed data. 
We use an LSTM architecture with 2 layers, an embedding size of 64, a hidden size of 256, and dropout probability of .33. This model is implemented using PyTorch \citep{pytorch2019} and optimised using Adam~\citep{kingma2014adam}.\looseness=-1

\paragraph{Model Selection.}
We evaluate the quality of our models by measuring their cross-entropy on held-out data, as is common in language modelling.
We report their train and test cross-entropies in \cref{tab:results_monomorphemic}.
Note that minimising this cross-entropy is equivalent to minimising the Kullback--Leibler divergence between our estimated model and the actual phonotactic distribution.
Thus, this serves as a metric of how well our model fits the data. 

\paragraph{Data.}
We use CELEX \citep{celex} as the source of data for our experiments, a dataset which covers three languages (English, German, and Dutch).
We restrict our analysis to mono-morphemic words,\footnote{We exclude words with spaces, hyphens, or apostrophes.}  
and note that we count words with multiple parts of speech as homophones (as both \citeauthor{piantadosi2012communicative} and \citeauthor{trott2020human} do).
This may inflate the number of homophones in our actual lexicons, thus reducing their surprisal in our analysis.
CELEX, however, marks zero derivation forms; we thus do not use these words on our analysis.
When computing the plug-in estimate of the lexicon's \renyi{} entropy
(in \cref{eq:collision_lexicon}) we use our entire dataset. 
We further use these wordforms to train our phonotactic models, splitting them in 80-10-10 train-validation-test sets. The test set is held out and only used for estimating the cross-entropy.
\section{Results,  Discussion and Conclusion\footnote{Our code is available at \url{https://github.com/rycolab/homophony-as-renyi-entropy}.}} \label{sec:results}

\Cref{tab:results_monomorphemic} displays our main results:\footnote{$\samplerenyi(\bW)$ is computed as the mean of our Monte Carlo samples for the phonotactic models.}
first we note that in terms of cross-entropy, the LSTM models provide better representations of the phonotactic distributions of all three languages.
Second, the Shannon entropy of the LSTM is smaller than the entropy of the $n$-gram. 
The $n$-gram, thus, appears to distribute probability mass more uniformly over the set $\calW$ than the LSTM, while the LSTM is more focused on the set of plausible wordforms.\looseness=-1

\begin{table}
    \centering
    \small
    \resizebox{\columnwidth}{!}{
    \begin{tabular}{llccccc}
    \toprule
        && \multicolumn{2}{c}{Cross-entropy} \\
        \cmidrule(lr){3-4}
        && Train & Test & $\ent_1(p)$ & $\ent_2(p)$ & $\samplerenyi(\bW)$ \\
         \midrule
        \\[-.9em]
        \multirow{3}{*}{\rotatebox{90}{\bf English}}
        &$n$-gram & 13.61 & 28.10 & 30.45 & 13.89 & 13.90$^*$ \\
        &LSTM & 18.75 & \textbf{19.89} & 26.46 & 14.77 & \textbf{14.77}$^*$ \\
        &Lexicon & - & - & - & - & 15.02\phantom{$^*$} \\
        \\[-.9em]
        \midrule
        \\[-.9em]
        \multirow{3}{*}{\rotatebox{90}{\bf German}}
        &$n$-gram & 14.08 & 29.25 & 30.44 & 14.27 & 14.27$^*$ \\
        &LSTM & 20.26 & \textbf{21.35} & 27.74 & 15.87 & \textbf{15.88}\phantom{$^*$} \\
        &Lexicon & - & - & - & - & \textbf{15.67}\phantom{$^*$} \\
        \\[-.9em]
        \midrule
        \\[-.9em]
        \multirow{3}{*}{\rotatebox{90}{\bf Dutch}}
        & $n$-gram & 13.89 & 26.08 & 30.45 & 14.07 & 14.06$^*$ \\
        & LSTM & 18.37 & \textbf{18.94} & 26.81 & 15.16 & 15.16$^*$ \\
        & Lexicon & - & - & - & - & \textbf{14.60}\phantom{$^*$} \\
        \\[-.9em]
         \bottomrule
         \multicolumn{7}{l}{$^*$Statistically different from lexicon's \renyi{} entropy ($p<0.01$).}
    \end{tabular}%
    }
    \vspace{-5pt}
    \caption{Cross-entropy, Shannon's entropy and \renyi{} entropy for both the $n$-gram, LSTM and lexicon.
    }
    \label{tab:results_monomorphemic}
    \vspace{-10pt}
\end{table}

The \renyi{} collision entropy results must be more carefully analysed.
At first glance, we see that the $n$-gram model has the smallest \renyi{} entropy across all languages, having more than 1 bit difference to the lexicon's sample \renyi{} entropy in both English and German.
This may lead one to conclude that homophony is strongly disfavoured in all these languages.
Nonetheless, the LSTM's collision entropy is considerably larger than the $n$-gram model's, while having both a lower cross-entropy and Shannon entropy.
We posit this is due to the $n$-gram strongly overfitting  the training set, giving these instances a higher probability than they are due.
These few overfit wordforms drive its \renyi{} entropy down, while the rest of the probability mass is spread over $\calW$ and increases the $n$-gram's Shannon entropy.\footnote{See \cref{app:shannon_vs_renyi} for a longer discussion of this behaviour.}
In other words, $n$-gram models do not approximate $p(\bw)$ well, and the assumption (i) of our hypothesis test does not hold.

When we compare the \renyi{} entropy of the LSTM to the lexicon's, we get much more nuanced results. 
While the English lexicon seems to hinder homophony---homophony is more surprising in real lexicons than expected from their phonotactics---the opposite is true for Dutch. 
Meanwhile, German presents no clear trends.
We should, however, refrain from making strong claims about these results.
While the difference between the LSTM's train and test cross-entropy is small, implying that it overfits only to a small degree, its precise quantitative impact on the \renyi{} entropy is hard to quantify.
Furthermore, expanding our analysis to CELEX's multi-morphemic words leads to somewhat different results (see \cref{app:multimorphemic}).
Hence, we see no clear pattern across these languages, and find, thus, no pressure either in favour or against homophony.\looseness=-1

We conclude this section with a warning.
When exploring linguistics using language models, one should carefully consider these model's inherent inductive biases and their potential effects on results.
While overfit $n$-grams provide strong evidence towards homophony avoidance in natural lexicons, we arrive at different results using better models.\looseness=-1%

\section*{Acknowledgements}

We thank the anonymous reviewers for their helpful feedback.  
We also thank Kyle Mahowald for numerous discussions about polysemy and homophony, M\'{a}rio Alvim for discussions about the \renyi{} entropy, and Sean Trott for detailed feedback on his paper and on this manuscript.

\section*{Ethical Considerations}

The authors foresee no ethical concerns with the research presented in this paper.

\bibliography{acl2020}
\bibliographystyle{acl_natbib}

\newpage
\appendix

\section{More Related Work} \label{app:related_work}

\begin{figure*}
    \centering
    \begin{subfigure}[b]{.5\textwidth}
        \includegraphics[width=\columnwidth]{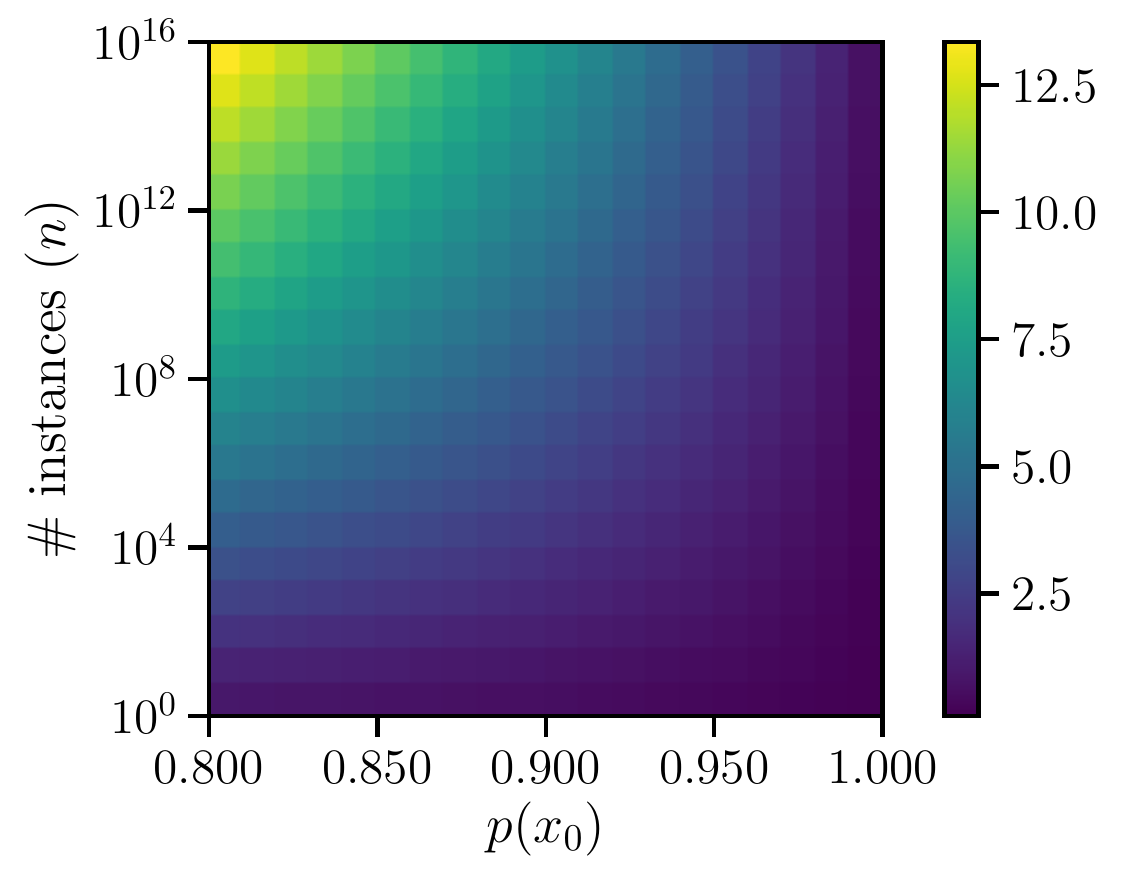}
        \caption{Shannon entropy}
    \end{subfigure}%
    ~
    \begin{subfigure}[b]{.5\textwidth}
        \includegraphics[width=\columnwidth]{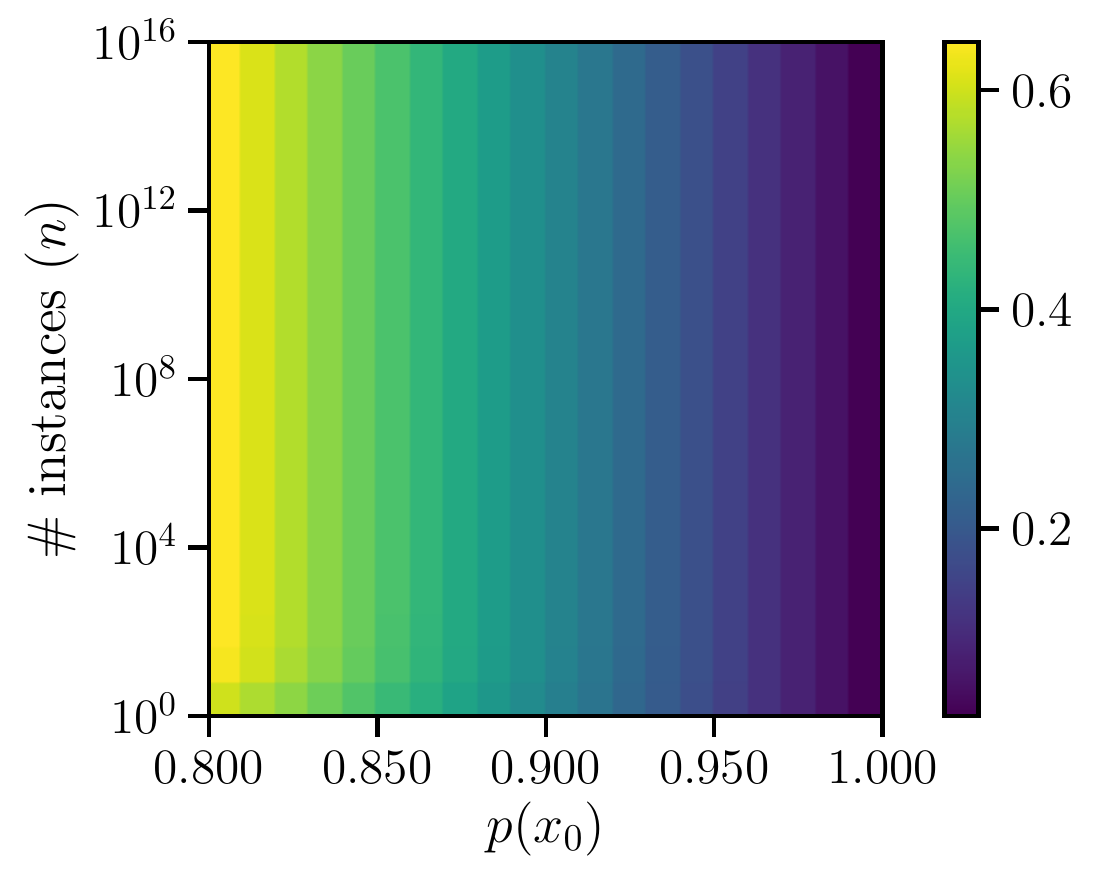}
        \caption{\renyi{} entropy}
    \end{subfigure}
    \caption{Shannon and \renyi{} collision entropies for a distribution where one instance has probability $p(x_0)$ and another $n$ instances have probability mass uniformly distributed among them.}
    \label{fig:shannon_renyi}
\end{figure*}

Related to the topic of homophony is the debate surrounding the effect of lexical neighbourhoods in the lexicon---of which homophony can potentially be seen as an extreme case.
While dense lexical neighbourhoods may hinder lexical recognition \citep{luce1986neighborhoods,luce1998recognizing,magnuson2007dynamics}, a dense phonotactic space would imply more economic wordforms \citep{zipf1949human,piantadosi2011word}. 
Furthermore, phonotactically well-formed words are both recognised faster \citep{vitevitch1999phonotactics} and easier for young children to learn  \citep{coady2004young}.
It is thus not clear if denser lexical neighbourhoods would lead to a more or less ``efficient'' lexicon.
\citet{dautriche2017words} showed, using a very similar methodology to \citeposs{trott2020human}, that words have more phonological neighbours than would be expected by chance.
Their results, though, are vulnerable to similar criticisms to the ones that we provide in this paper about the use of potentially overfit $n$-gram models.\looseness=-1

\citet{caplan2020miller} propose a phonotactic monkey---an extension of \citeposs{miller1957some} random typing thought experiment---to make a similar point to \citeauthor{trott2020human}.
They rely on similar $n$-gram phonotactic models for their analysis of homophony, being thus vulnerable to the same criticisms we present here.
\citeauthor{caplan2020miller}, however, refrain from making claims about a pressure towards or against homophony---positing homophony is only a result of chance, a byproduct of language's generation process.
\citeauthor{caplan2020miller} further analyse polysemy in their paper, a topic with which we do not engage here.
Incorporating form--meaning interactions to this analysis is not straightforward, but would be interesting as future work.
We believe, however, that when studying homophony, ignoring them is not particularly problematic, since the multiple meanings a wordform takes are unrelated by definition.
Nonetheless, it would be critical to incorporate meaning in our model if we wanted to study other forms of lexical ambiguity.
Furthermore, several recent work has shown that the lexicon is (weakly) systematic, i.e. words with similar meanings are more likely to have similar wordforms \citep{dautriche2017wordform,gutierrez-etal-2016-finding,pimentel-etal-2019-meaning}.
Our i.i.d. use of a phonotactic distribution, though, completely ignores these form--meaning correlations.

\hfill
\newpage

\section{Multi-morphemic results} \label{app:multimorphemic}

\begin{table}[h]
    \centering
    \small
    \resizebox{\columnwidth}{!}{
    \begin{tabular}{llccccc}
    \toprule
        && \multicolumn{2}{c}{Cross-entropy} \\
        \cmidrule(lr){3-4}
        && Train & Test & $\ent_1(p)$ & $\ent_2(p)$ & $\samplerenyi(\bW)$ \\
         \midrule
        \\[-.7em]
        \multirow{3}{*}{\rotatebox{90}{\bf English}}
        &$n$-gram & 17.97 & 27.16 & 29.74 & 16.07 & 16.07$^*$ \\
        &LSTM & 25.00 & \textbf{25.55} & 28.92 & 17.97 & 17.96$^*$  \\ 
        &Lexicon & - & - & - & - & \textbf{16.67}\phantom{$^*$} \\
        \\[-.7em]
        \midrule
        \\[-.7em]
        \multirow{3}{*}{\rotatebox{90}{\bf German}}
        &$n$-gram & 20.37 & 29.67 & 31.39 & 15.69 & 15.69$^*$ \\
        &LSTM & 27.51 & \textbf{27.94} & 31.55 & 18.85 & \textbf{18.85}$^*$  \\ 
        &Lexicon & - & - & - & - & 19.91\phantom{$^*$} \\ 
        \\[-.7em]
        \midrule
        \\[-.7em]
        \multirow{3}{*}{\rotatebox{90}{\bf Dutch}}
        & $n$-gram & 23.03 & 30.61 & 31.81 & 15.37 & 15.37$^*$ \\
        &LSTM & 29.37 & \textbf{29.76} & 30.99 & 19.54 & \textbf{19.54}$^*$  \\ 
        &Lexicon & - & - & - & - & 20.57\phantom{$^*$} \\
        \\[-.7em]
         \bottomrule
         \multicolumn{7}{l}{$^*$Statistically different from lexicon's \renyi{} ($p<0.01$).}
    \end{tabular}%
    }
    \caption{Cross-entropy, Shannon's entropy and \renyi{} entropy for both the $n$-gram, LSTM and real lexicon while also analysing multi-morphemic wordforms. 
    }
    \label{tab:results}
    \vspace{-10pt}
\end{table}

\begin{figure}[t]
    \centering
    \includegraphics[width=.99\columnwidth]{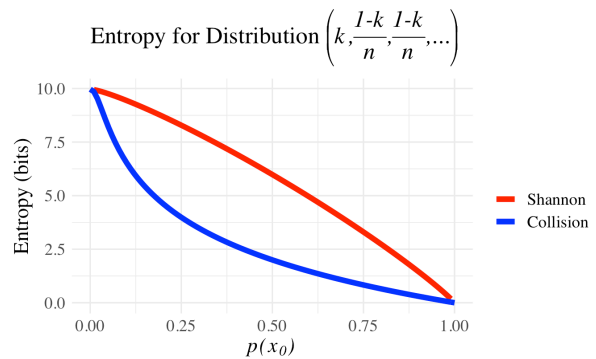}
    \caption{Shannon and \renyi{} collision entropies for distribution with uniform mass spread over all but one instance.}
    \label{fig:shannon_renyi_fixedn}
\end{figure}

\section{Shannon vs. Collision Entropy} \label{app:shannon_vs_renyi}

In this section, we exemplify the difference between the \renyi{} and Shannon entropies.
With that in mind, we define a distribution over $n+1$ instances, where probability mass is distributed such that:\looseness=-1
\begin{align}
    p(x) = \left\{ \begin{array}{cc}
        k & x=x_0 \\
        \frac{1 - k}{n} & x \neq x_0
    \end{array} \right.
\end{align}
This distribution, thus, puts $k$ probability mass on $x_0$, and uniformly distributes the rest among the $n$ other instances.

\Cref{fig:shannon_renyi_fixedn} shows the behaviour of both entropies with $n$ fixed at $99$ and while we vary the mass in $p(x_0)$. In it, we see how the \renyi{} entropy is always smaller or equal to the Shannon entropy---an important property of the \renyi{} collision entropy.

\Cref{fig:shannon_renyi} presents these entropies while changing both $n$ and $k$, i.e. $p(x_0)$.
In this figure, we see that when some large probability mass is already allocated to a single instance (or a few), the \renyi{} entropy becomes relatively constant with relation to the distribution among the other instances.
The Shannon entropy, on the other hand, is still susceptible to these other instances distribution, and goes to infinity as $n \rightarrow \infty$.

Relating this to our analysed $n$-gram models, we see that, by allocating a large probability mass to the training set, they can obtain a small \renyi{} entropy. However, since they smoothly distribute the rest of their probability mass throughout $\calW$ they achieve a high Shannon entropy.

\onecolumn

\section{Proof of \cref{thm:renyi_bound}} \label{app:renyi_bound_proof}

\textbf{\Cref{thm:renyi_bound}.}
\textit{
Let $\calW_\delta$ be the set of all wordforms with a probability at least $\delta$, i.e.
\begin{equation}
    \calW_\delta = \{\bw \mid \bw \in \calW, p(\bw) \ge \delta  \} \nonumber
\end{equation}
We can bound our estimate error as:
\begin{align}
    \ent_2(p) &\leq \hat{\ent}_2(p) \leq \ent_2(p) + \log\left(1 + \frac{(1 - \xi)\,\delta}{\eta}\right) \nonumber
\end{align}
where we can precisely compute both $\xi$ and $\eta$, which are defined as
\begin{align}
    \xi = \sum_{\bw \in \calW_\delta} p(\bw), \qquad \eta = \sum_{\bw \in \calW_\delta} p(\bw)^2 \nonumber
\end{align}
}

\begin{proof}
We first decompose the error in our estimate as
\begin{subequations}
\begin{align}
    \hat{\ent}_2(p) - \ent_2(p) & \stackrel{(1)}{=} \log\left(\sum_{\bw \in \calW_\delta} p(\bw)^2 + \sum_{\bw \in \calW \setminus\calW_\delta} p(\bw)^2 \right) -\log\left(\sum_{\bw \in \calW_\delta} p(\bw)^2 \right)  \\
    &=\log\left(\frac{\sum_{\bw \in \calW_\delta} p(\bw)^2 + \sum_{\bw \in \calW \setminus\calW_\delta} p(\bw)^2}{\sum_{\bw \in \calW_\delta} p(\bw)^2}\right)  \\
    &= \log\left(1 + \frac{\sum_{\bw \in \calW \setminus\calW_\delta} p(\bw)^2}{\eta}\right) 
\end{align}
\end{subequations}
where $\eta = \sum_{\bw \in \calW_\delta} p(\bw)^2$ and equality (1) follows from the definition of $\ent_2$ and the separation of the sum into two parts.
We define
$\xi=\sum_{\bw \in \calW_\delta} p(\bw)$ and, thus, $1 -\xi=\sum_{\bw \in \calW \setminus \calW_\delta} p(\bw)$.
Now, by invoking \Cref{lemma:technical}, we have the following inequality
\begin{align}
    0 \leq \sum_{\bw \in \calW \setminus\calW_\delta} p(\bw)^2 &\leq \left( 1 - \xi \right) \delta 
\end{align}
which proves the theorem.

\end{proof}

\begin{lemma}{(Technical Lemma)}\label{lemma:technical}
Let $\{x_n\}_{n=1}^N$ be real values in the interval $[0, \delta]$ such that $\sum_{n=1}^N x_n = \beta$.
Then, 
\begin{equation} \label{eq:technical_lemma}
    \sum_{n=1}^N x_n^2 \leq \beta \cdot \delta 
\end{equation}
\end{lemma}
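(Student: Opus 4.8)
The plan is to bound the quantity term-by-term rather than to reason about the sum as a whole. The crucial observation is that the hypothesis $x_n \in [0,\delta]$ controls each summand in two complementary ways at once: every $x_n$ is nonnegative, and every $x_n$ is bounded above by $\delta$. These two facts combine into a single linear upper bound on the quadratic term $x_n^2$, which is all the argument needs.

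Concretely, first I would fix an arbitrary index $n$ and note that, since $x_n \le \delta$, multiplying this inequality through by the nonnegative quantity $x_n$ yields $x_n^2 \le \delta\, x_n$. The nonnegativity of $x_n$ is exactly what guarantees that multiplying preserves the direction of the inequality, so no case analysis is required. Second, I would sum this pointwise bound over all $n$ from $1$ to $N$, obtaining $\sum_{n=1}^N x_n^2 \le \delta \sum_{n=1}^N x_n$. Finally, substituting the hypothesis $\sum_{n=1}^N x_n = \beta$ collapses the right-hand side to $\beta\,\delta$, which is precisely the claimed inequality in \cref{eq:technical_lemma}.

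There is no substantive obstacle here: the entire proof is one application of the elementary fact that $t^2 \le \delta t$ for $t \in [0,\delta]$, followed by linearity of summation. The only point that warrants a moment of care is the direction of the inequality when multiplying $x_n \le \delta$ by $x_n$, and this is handled by the nonnegativity assumption on the $x_n$. If I wanted to confirm the statement is sharp, I would observe that the bound is attained whenever the total mass $\beta$ is placed on entries each equal to $\delta$ (for instance a single entry equal to $\delta$, so that $\beta = \delta$ and both sides equal $\delta^2$), which shows the factor $\delta$ cannot be reduced without additional hypotheses.

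When this lemma is then invoked in the proof of \cref{thm:renyi_bound}, the role of $\{x_n\}$ is played by the probabilities $\{p(\bw)\}_{\bw \in \calW \setminus \calW_\delta}$, each of which lies in $[0,\delta)$ by the very definition of $\calW_\delta$, and $\beta$ is their total mass $1 - \xi$; the lemma then delivers exactly the estimate $\sum_{\bw \in \calW \setminus \calW_\delta} p(\bw)^2 \le (1-\xi)\,\delta$ needed to close the argument.
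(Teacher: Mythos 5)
Your proof is correct, and it is genuinely different from---and considerably shorter than---the one in the paper. You bound each summand pointwise via $x_n^2 \le \delta\, x_n$ (valid because $0 \le x_n \le \delta$) and then sum, so the whole argument is one line plus linearity. The paper instead runs an extremal argument: it claims the configuration maximising $\sum_n x_n^2$ under the constraints puts $x_k = \delta$ on $K = \lfloor \beta/\delta \rfloor$ coordinates and the remainder $\beta - K\delta$ on one more, proves maximality by an exchange/contradiction step (perturbing two coordinates by $\epsilon$ and showing the sum of squares strictly increases), and then explicitly evaluates and bounds the value of that maximal configuration by $\beta\delta$. What the paper's route buys is an explicit description of the worst-case configuration; what your route buys is brevity and robustness---it needs no sorting, no case analysis, and no floor-function bookkeeping, and it sidesteps the delicate points in the exchange step (e.g.\ ensuring the perturbed $x_j - \epsilon$ stays nonnegative, and that suitable indices $i<j$ with $\delta > x_i \ge x_j$ actually exist). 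Your closing remarks are also accurate: the bound is attained when all mass sits on coordinates equal to $\delta$, and in the application to \cref{thm:renyi_bound} the $x_n$ are the probabilities $p(\bw)$ for $\bw \in \calW \setminus \calW_\delta$, each below $\delta$ by definition of $\calW_\delta$, with total mass $\beta = 1 - \xi$, which yields exactly $\sum_{\bw \in \calW \setminus \calW_\delta} p(\bw)^2 \le (1-\xi)\,\delta$ as required.
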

\begin{proof}
We claim the maximal solution---i.e., the set $\{x_n\}_{n=1}^N$ which maximises \cref{eq:technical_lemma}---is $x_k = \delta$ for $k \in 1, \ldots, K$ and $x_{K+1} = \beta - K \delta$ for some $K < N$.
We prove its maximality by contradiction.
Suppose there exists another maximal solution.
Further, suppose that the values for this solution are sorted, such that $x_i \ge x_j$ for any $i<j$.
Then, there must exist two indices $i$ and $j$ such that $\delta > x_i \geq x_j$ and $i<j$. 
Now, let $\epsilon = \delta - x_i > 0$. We can prove that:\looseness=-1%
\begin{subequations}
\begin{align}
     (x_i + \epsilon)^2 + (x_j - \epsilon)^2 
     &= x_i^2 + 2 x_i \epsilon + \epsilon^2 + x_j^2 - 2 x_j \epsilon + \epsilon^2\\
     &= x_i^2 + x_j^2 + 2 \underbrace{(x_i - x_j)}_{\ge 0} \epsilon + 2 \epsilon^2 \\
     &\stackrel{(1)}{>} x_i^2 + x_j^2
\end{align}
\end{subequations}
where (1) relies on the fact that, per our assumptions, $\epsilon > 0$ and $x_i \ge x_j$.
Since this is a strict inequality, this alternate solution is not truly maximal, completing our proof by contradiction.

Now, we can use our maximal solution to compute the desired upper-bound.
First, we note the value of $K=\left\lfloor \frac{\beta}{\delta} \right\rfloor$.
The value for the maximal solution will thus be
\begin{subequations}
\begin{align}
    \sum_{n=1}^N x_n^2 &= K \cdot \delta^2 +  \left(\beta - K \delta \right)^2 \\
    &= \left\lfloor \frac{\beta}{\delta} \right\rfloor \cdot \delta^2 + \left(\beta - \left\lfloor \frac{\beta}{\delta} \right\rfloor \delta \right)^2 \\
    &= \left\lfloor \frac{\beta}{\delta} \right\rfloor \cdot \delta^2 + \underbrace{\left(\beta - \left\lfloor \frac{\beta}{\delta} \right\rfloor \delta \right)}_{\ge 0} \underbrace{\left(\beta - \left\lfloor \frac{\beta}{\delta} \right\rfloor \delta \right)}_{< \delta} \\
    &\le \left\lfloor \frac{\beta}{\delta} \right\rfloor \cdot \delta^2 + \left(\beta - \left\lfloor \frac{\beta}{\delta} \right\rfloor \delta \right) \cdot \delta \\
    &= \left\lfloor \frac{\beta}{\delta} \right\rfloor \cdot \delta^2 + \left(\frac{\beta}{\delta}\delta - \left\lfloor \frac{\beta}{\delta} \right\rfloor \delta \right) \cdot \delta \\
    &= \left\lfloor \frac{\beta}{\delta} \right\rfloor \cdot \delta^2 + \left(\frac{\beta}{\delta} - \left\lfloor \frac{\beta}{\delta} \right\rfloor \right) \cdot \delta^2 \\
    &=\beta \cdot \delta
\end{align}
\end{subequations}
As the value of the maximal solution is bounded by $\beta \cdot \delta$, any other solution must also have a value smaller than this, which completes the proof.

\end{proof}

\end{document}